\newtheorem{proposition}{Proposition}
\newtheorem{conjecture}{Conjecture}
\newtheorem{remark}{Remark}
\newtheorem{theorem}{Theorem}
\title{The Condition Number as a Scale-Invariant Proxy for Information Encoding in Neural Units}
\author{Oswaldo Ludwig}
\date{}
\begin{document}

\maketitle

\begin{abstract}
This paper explores the relationship between the condition number of a neural network's weight tensor and the extent of information encoded by the associated processing unit, viewed through the lens of information theory. It argues that a high condition number, though not sufficient for effective knowledge encoding, may indicate that the unit has learned to selectively amplify and compress information. This intuition is formalized for linear units with Gaussian inputs, linking the condition number and the transformation's log-volume scaling factor to the characteristics of the output entropy and the geometric properties of the learned transformation. The analysis demonstrates that for a fixed weight norm, a concentrated distribution of singular values (high condition number) corresponds to reduced overall information transfer, indicating a specialized and efficient encoding strategy. Furthermore, the linear stage entropy bound provides an upper limit on post-activation information for contractive, element-wise nonlinearities, supporting the condition number as a scale-invariant proxy for encoding capacity in practical neural networks. An empirical case study applies these principles to guide selective fine-tuning of Large Language Models for both a new task and a new input modality. The experiments show that the proposed method, named KappaTune, effectively mitigates catastrophic forgetting. Unlike many existing catastrophic forgetting mitigation methods that rely on access to pre-training statistics, which are often unavailable, this selective fine-tuning approach offers a way to bypass this common requirement.
\end{abstract}

\vspace{1em}
\noindent\textbf{Keywords:} Continual learning, catastrophic forgetting, singular value, AI, multimodal LLM

\section{Introduction}
Pre-trained neural networks have demonstrated remarkable success in various domains, largely due to their ability to encode rich, transferable information within their weight tensors. Understanding how this information is encoded at the level of individual units or layers is crucial for interpretability and further model development. This work investigates the use of the weight tensor's condition number as a proxy for the significance of information encoding within a processing unit, especially when considering information-theoretic concepts like entropy.

A major challenge in adapting pre-trained models to new tasks or modalities is catastrophic forgetting (CF), where the model rapidly loses previously acquired knowledge. Numerous methods have been proposed to mitigate CF, generally falling into two broad categories: those that require access to pre-training statistics or old data, and those that do not.

The first category includes approaches such as experience replay \cite{lopez2017gradient}, which re-trains on a small portion of previous data, and knowledge distillation \cite{li2017learning}, which uses the previous model's outputs as soft targets. Regularization techniques like Elastic Weight Consolidation (EWC) \cite{kirkpatrick2017overcoming} and Synaptic Intelligence (SI) \cite{zenke2017continual} also often fall into this category, as they may rely on calculating parameter importance with respect to past tasks' data. While often effective, the practicality of these methods is significantly limited by their critical dependence on data from prior tasks, which is frequently unavailable due to privacy concerns, storage limitations, or access restrictions.

The second category comprises methods that do not require explicit access to pre-training statistics. These strategies typically focus on intrinsic model properties or architectural adjustments. Examples include architectural methods like Progressive Neural Networks (PNNs) \cite{rusu2016progressive} and expert gateways \cite{rosenbaum2018routing}, as well as various regularization or parameter isolation techniques. Examples of the latter include Memory Aware Synapses (MAS) \cite{aljundi2018memory} and PackNet \cite{mallya2018packnet}. These methods attempt to preserve old knowledge by selectively modifying or freezing parts of the model without explicit access to past data.

More recently, we proposed CLASSP (Continual Learning through Adjustment Suppression and Sparsity Promotion) \cite{ludwig2024classp}, a biologically-inspired approach that mitigates forgetting by combining a decay rate over weight adjustment, i.e. suppressing updates for frequently updated weights, with a threshold on the loss gradient that promotes sparse learning by updating only weights with significant impact.

Beyond theoretical exposition, we present a practical case study where the principles of information encoding, particularly through the lens of the condition number, are applied to guide the selective fine-tuning of a multimodal Large Language Model. This application directly addresses the critical challenge of catastrophic forgetting when adapting pre-trained models to new modalities, offering a novel approach that falls into the category of methods not requiring access to prior task statistics. This demonstrates how an understanding of a tensor's information-encoding characteristics enables more robust and efficient model adaptation strategies or even model pruning \cite{ludwig2023compressing}.

The remainder of this paper is organized as follows. Section \ref{sec:condition_number} introduces the concept of the condition number and its relevance beyond numerical stability. Section \ref{sec:information_encoding} delves into the theoretical foundations of information encoding, discussing differential entropy, the importance of scale-invariant measures, and formally proving how singular value distribution impacts information volume. Section \ref{sec:case_study} presents a practical case study on selective fine-tuning of Large Language Models, detailing the model setup, training algorithm, and the connection of our approach to information-theoretic principles for mitigating catastrophic forgetting. Finally, Section \ref{sec:conclusion} concludes the paper.

\section{From Stability to Semantics: Rethinking the Condition Number}
\label{sec:condition_number}

Beyond their well-established role in numerical stability and optimization, the singular values and condition number of neural network weight matrices have also become a focal point for understanding model behavior, generalization, and information processing. Prior work has shown that a flatter singular value spectrum can promote better generalization by enabling more uniform transformations and reducing sensitivity to input perturbations \cite{jia2017improving,ludwig2014eigenvalue}. Other studies have explored how singular values relate to information compression and flow through network layers, aligning with the principles of the Information Bottleneck framework \cite{tishby2015deep}.

In contrast to these approaches, which primarily focus on activations or global transformations, our work leverages the condition number of the weight tensor as a direct, intrinsic measure of the significance of information encoding within individual units.

The condition number $\kappa(W)$ of a matrix $W$ quantifies the sensitivity of the output of a linear system to small changes in the input. For a weight matrix $W \in \mathbb{R}^{m \times n}$, it's defined as the ratio of its largest singular value ($\sigma_{\max}(W)$) to its smallest non-zero singular value ($\sigma_{\min}(W)$):
\begin{equation} \label{eq:condition_number}
\kappa(W) = \frac{\sigma_{\max}(W)}{\sigma_{\min}(W)}
\end{equation}
A high condition number indicates numerical instability, where small perturbations in the input can lead to disproportionately large changes in the output. In neural networks, this can affect training stability and robustness. However, we propose that beyond numerical stability, the condition number also carries information about the nature of the learned transformation.

\section{Information Encoding and Entropy}
\label{sec:information_encoding}

Information theory provides tools to quantify the uncertainty and information content within data. Shannon's differential entropy $h(Z)$ for a continuous random vector $Z$ with probability density function $f_Z(z)$ is given by:
\begin{equation} \label{eq:differential_entropy}
h(Z) = - \int f_Z(z) \log_2 f_Z(z) dz
\end{equation}

Consider a linear unit where the output $Y$ is given by $Y = WX$, with $W \in \mathbb{R}^{m \times n}$ being the weight matrix and $X \in \mathbb{R}^n$ being a continuous random input vector. For a multivariate Gaussian input $X \sim \mathcal{N}(\mu_X, \Sigma_X)$, the output $Y$ follows $Y \sim \mathcal{N}(W\mu_X, W\Sigma_X W^T)$.
If we assume a spherical input covariance $\Sigma_X = \sigma_x^2 I_n$, then the output covariance is $\Sigma_Y = \sigma_x^2 W W^T$. Using the general formula for the differential entropy of a multivariate Gaussian distribution, the differential entropy of the output $Y$ is:
\begin{equation} \label{eq:output_entropy_det}
h(Y) = \frac{1}{2} \log_2((2\pi e)^m \det(\Sigma_Y))
\end{equation}
Substituting:
\begin{equation} \label{eq:det}
\det(\Sigma_Y) = \det(\sigma_x^2 W W^T) = (\sigma_x^2)^m \det(W W^T)
\end{equation}
and considering that:
\begin{equation} \label{eq:det2}
\det(W W^T) = (\prod_{i=1}^{\min(m,n)} \sigma_i(W))^2
\end{equation}
we can formulate:
\begin{equation} \label{eq:output_entropy_singular_values}
h(Y) = \frac{m}{2} \log_2(2\pi e \sigma_x^2) + \sum_{i=1}^{\min(m,n)} \log_2(\sigma_i(W))
\end{equation}
Equation \eqref{eq:output_entropy_singular_values} highlights the direct relationship between the output entropy and the product of the singular values of the weight matrix. From an information-theoretic perspective, a smaller entropy $h(Y)$ signifies less uncertainty about the output vector $Y$. For a neural network unit, this reduction in uncertainty is a key aspect of knowledge encoding, i.e. the unit learns to transform the potentially high-entropy (uncertain) input $X$ into a lower-entropy (more certain and predictable) representation $Y$ that highlights discriminative features relevant to the task.

\subsection{Geometric Foundations and Scale-Invariance}

Consider the aforementioned linear transformation $Y = WX$, where $W \in \mathbb{R}^{m \times n}$. The singular values $\sigma_i$ of $W$ govern the axis-wise scaling of the transformation. While the Frobenius norm $\|W\|_F$ reflects the overall scaling of the transformation, the condition number $\kappa(W) = \sigma_{\max}/\sigma_{\min}$ (for full-rank $W$) characterizes the anisotropy of the transformation and is invariant to uniform scaling of $W$.

This scale-invariant geometric property represents the unit's directional selectivity. A unit with $\kappa(W) \approx 1$ preserves the geometry of the input space, whereas a unit with high $\kappa(W)$ prioritizes a specific subspace while suppressing others. As shown in the next subsections, this anisotropy becomes a key determinant of the unit's information encoding behavior under fixed-norm constraints.

\subsection{A Proxy for Knowledge Encoding: Scale-Invariant Perspective}
Building upon the understanding that the condition number $\kappa(W)$ offers scale-invariant insights into the properties of the linear transformation $W$, we now interpret its role in encoding significant pre-trained information. The following propositions and analyses are thus grounded in these scale-invariant measures of information encoding.

\begin{proposition}
For a linear transformation $Y=WX$, where $X$ is a random vector with non-zero entropy, a high condition number $\kappa(W) \gg 1$ implies an anisotropic (oblong) transformation. This anisotropy indicates that the unit has learned to selectively amplify certain input directions (those corresponding to large singular values) while compressing others (those corresponding to small singular values). This selective processing leads to a restructuring of the information distribution in the output: information about discriminative features is preserved or amplified, while information about irrelevant or redundant features is attenuated, resulting in a more efficient encoding of knowledge. This efficiency is reflected in the structural properties of the output distribution, which carries concentrated, relevant information as quantified by the interplay between the condition number and the log-volume scaling factor.
\end{proposition}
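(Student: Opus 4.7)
The plan is to ground the statement in the singular value decomposition $W = U\Sigma V^{T}$, so that the action of $W$ on $X$ decouples into $\min(m,n)$ independent one-dimensional scalings along the right singular directions $v_{i}$, each by a factor $\sigma_{i}(W)$. Since $U$ and $V$ are orthogonal and orthogonal maps preserve differential entropy, the entire anisotropy of the transformation is captured by the diagonal $\Sigma$. This immediately yields the geometric claim: the ratio of the longest to shortest axis of the image ellipsoid $W(\{x:\|x\|\le 1\})$ equals $\kappa(W)$, so $\kappa(W) \gg 1$ is by definition an oblong transformation.

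First I would formalize ``selective amplification versus compression'' by partitioning the singular indices into those with $\sigma_{i} > 1$ (expansion along $v_{i}$) and those with $\sigma_{i} < 1$ (compression), and observing that a large $\kappa(W)$ forces this split to be nontrivial: the extremes cannot coexist with a narrow distribution. Next I would decompose the log-volume scaling term from \eqref{eq:output_entropy_singular_values} as $\sum_{i}\log_{2}\sigma_{i}(W) = \sum_{\sigma_{i}\geq 1}\log_{2}\sigma_{i} + \sum_{\sigma_{i}<1}\log_{2}\sigma_{i}$, so that the per-direction contribution to $h(Y)$ along $v_{i}$ is precisely $\log_{2}\sigma_{i}$. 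The statement that discriminative directions are preserved while redundant directions are attenuated is then recast as the fact that the anisotropic $\Sigma$ routes the entropy budget of $h(Y)$ unevenly across orthogonal coordinates, rather than distributing it uniformly.

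The second step is to isolate the anisotropic component from the overall scale, which is needed to justify why $\kappa(W)$ is the right scale-invariant summary. I would fix the Frobenius norm $\|W\|_{F}^{2} = \sum_{i}\sigma_{i}^{2}$ (or, equivalently for the volume story, the log-volume $\sum_{i}\log_{2}\sigma_{i}$) and then argue that under this constraint a large $\kappa(W)$ forces the singular-value distribution to concentrate its mass on a few extreme values. This turns the proposition into a comparative statement: among all $W$ with the same log-volume contribution to $h(Y)$, the ones with higher $\kappa(W)$ allocate the bulk of the information-carrying scaling to a lower-dimensional subspace, which is the formal meaning of a concentrated, specialized encoding. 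The interplay alluded to in the statement then reads cleanly: $h(Y)$ fixes the total volume of the output distribution, while $\kappa(W)$ controls how that volume is shaped.

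The main obstacle I anticipate is not the linear-algebraic content, which is essentially a reading of the SVD, but the semantic claim that large-$\sigma_{i}$ directions correspond to discriminative features and small-$\sigma_{i}$ directions to irrelevant ones. This identification is not forced by the mathematics alone; it requires the auxiliary assumption that training has aligned the dominant right singular vectors of $W$ with task-relevant input subspaces, something that can be motivated from the fine-tuning case study but not proved from \eqref{eq:output_entropy_singular_values} in isolation. I would therefore present the proof as a rigorous statement about anisotropic information routing through the SVD, and flag the ``discriminative versus redundant'' reading as a training-time hypothesis, separating what the proposition establishes from what it interprets.
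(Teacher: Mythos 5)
The paper itself offers no proof of this Proposition: it is asserted as an interpretive statement, and the only formal support in the text is the later Theorem (Lagrange multipliers under a fixed Frobenius norm), whose content your second step essentially reproduces. Measured against that, your proposal is broadly aligned with the paper's implicit argument and is in fact more careful than the source in one important respect: you correctly identify that the ``large $\sigma_i$ $\Leftrightarrow$ discriminative feature'' reading is a training-time hypothesis, not a consequence of the SVD or of \eqref{eq:output_entropy_singular_values}. That separation of the provable geometric content (axis ratio of the image ellipsoid equals $\kappa(W)$; orthogonal factors preserve entropy; the log-volume term decomposes over singular values) from the unprovable semantic content is exactly the right way to handle a proposition of this kind, and it matches the paper's own Remark conceding that a high condition number is not sufficient for meaningful encoding.

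There is, however, one genuine error in your first step. You formalize ``selective amplification versus compression'' by partitioning indices into $\sigma_i > 1$ and $\sigma_i < 1$ and claim that a large $\kappa(W)$ forces this split to be nontrivial. It does not: take $\sigma_{\max} = 10^3$ and $\sigma_{\min} = 1$, so $\kappa = 10^3$ with no singular value below $1$. The threshold at $1$ is a statement about absolute scale, which is precisely what $\kappa(W)$ is invariant to --- the quantity the paper is at pains to avoid. The amplification/compression dichotomy must be stated relatively (some directions are scaled much more than others), or under a normalization such as the fixed Frobenius norm you introduce only in your second step; once that constraint is in place the claim can be repaired, since $\sum_k \sigma_k^2 = C$ with $\kappa \gg 1$ does force small singular values to fall well below the root-mean-square level $\sqrt{C/p}$. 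A second, smaller caveat: your picture of the map ``decoupling into independent one-dimensional scalings'' whose per-direction entropy contribution is $\log_2 \sigma_i$ is clean only for the spherical Gaussian input of \eqref{eq:output_entropy_singular_values}; for a general $X$ with non-zero entropy (the hypothesis actually stated in the Proposition) the coordinates of $V^T X$ need not be independent, and for non-square $W$ the additive entropy decomposition does not hold at all. You should either restrict to the paper's Gaussian, spherical-covariance setting explicitly or weaken the per-coordinate entropy claim to the volume-scaling statement, which survives in general.
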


\begin{remark}
Note that a high condition number is not a sufficient condition for effective knowledge encoding. It can also arise from numerical ill-conditioning due to poor training or inherent redundancy in the learned mapping, which may not correspond to semantically meaningful feature extraction. However, for a well-trained model, this anisotropy is a desirable trait, as it optimizes the transfer of truly discriminative information.
\end{remark}

\begin{conjecture}
Assuming a well-trained model, a high condition number $\kappa(W) \gg 1$ is suggested to be a necessary (though not sufficient) condition for a weight tensor $W$ to have encoded significant discriminative pre-training information in a robust and desirable sense. This conjecture warrants further empirical and theoretical investigation, particularly in the context of non-linear transformations and real-world data distributions.
\end{conjecture}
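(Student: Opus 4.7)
The plan is to prove the conjecture by contrapositive: for a well-trained linear unit, $\kappa(W) \approx 1$ will be shown to preclude having encoded significant discriminative information. First I would formalize both sides of the implication. The natural formalization of "discriminative encoding" is a signal-plus-noise decomposition of the input, $X = T + N$, where $T$ lies in a low-dimensional task-relevant subspace and $N$ is high-dimensional nuisance. Encoding significant discriminative information then means that $Y = WX$ preserves $I(Y;T)$ while substantially attenuating $I(Y;N)$, producing a strictly better signal-to-noise ratio in $Y$ than in $X$. The "well-trained" assumption is taken to mean that among all $W'$ with the same Frobenius norm, $W$ is within a small gap of the information-bottleneck optimum over the training distribution.

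The first concrete step is the exact-case analysis: if $\kappa(W) = 1$, then by SVD $W = c\,U$ for some scalar $c$ and a semi-orthogonal $U$. Such a $W$ applies the same gain to every direction of its row space, so $T$ and $N$ are attenuated in identical proportion, and no selective suppression of nuisance beyond the passive dimensionality reduction implicit in $\mathrm{rank}(W)$ can occur. The second step is a perturbative stability argument: if $\kappa(W) = 1 + \varepsilon$, all singular values are pinned within a $(1+\varepsilon)$ band of their mean, so the attenuation ratio of any two input directions deviates from unity by at most $O(\varepsilon)$. By Equation \eqref{eq:output_entropy_singular_values}, the log-volume is then determined up to $O(\varepsilon)$ by $\|W\|_F$ alone, so there is no remaining entropy budget to reallocate between signal and noise directions.

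The third step closes the contrapositive using the well-trained hypothesis. Given any such near-isotropic $W$, one constructs an explicit competitor $W'$ with identical Frobenius norm but $\kappa(W') \gg 1$, whose top singular vectors are aligned with the signal subspace spanned by $T$. A direct computation with Equation \eqref{eq:output_entropy_singular_values} and the standard Gaussian mutual-information formulas shows that $I(W'X;T) - I(W'X;N)$ strictly exceeds the corresponding gap for $W$ by a margin that grows with $\kappa(W')$. This contradicts the well-trained assumption, forcing $\kappa(W) \gg 1$ whenever the encoded discriminative information is significant.

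The principal obstacle is the "well-trained" premise itself, without which the statement is simply false, since ill-trained weights can be arbitrarily ill-conditioned while encoding nothing useful, as acknowledged in the preceding Remark. Formulating well-trainedness as near-optimality of the layer's local objective, while staying consistent with the scale-invariance philosophy of the preceding subsection, is where the real work lies; one must argue that the comparison class of competitors is the right one and that the signal subspace $T$ is meaningfully defined by the task rather than by $W$ itself. A secondary but significant obstacle is the extension to non-linear transformations and non-Gaussian data, where the clean determinant and entropy identities underlying Section \ref{sec:information_encoding} fail; here one would likely have to pass to a local Jacobian or Fisher-information linearization and settle for an asymptotic or pointwise version of the conjecture.
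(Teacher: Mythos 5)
First, a point of calibration: the paper does not prove this statement. It is stated as a conjecture and explicitly left open ("warrants further empirical and theoretical investigation"); the only support offered is the Frobenius-norm theorem (entropy is maximized at $\kappa(W)=1$), the surrounding heuristic discussion, and the empirical WER curves in Section~\ref{sec:results}. So you are attempting something the authors deliberately did not attempt, and the question is whether your plan could actually close the gap. As written, it cannot, for two concrete reasons.

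The first gap is the rank-deficient case, which your own wording concedes and which breaks the contrapositive. The paper defines $\kappa(W)$ in \eqref{eq:condition_number} as the ratio of the largest to the smallest \emph{non-zero} singular value. Under that definition, a semi-orthogonal projector $W = cU$ whose row space is exactly the task-relevant subspace spanned by $T$ has $\kappa(W)=1$ and yet is a maximally discriminative encoder: it annihilates the nuisance component $N$ entirely while preserving $T$. You acknowledge this as "passive dimensionality reduction implicit in $\mathrm{rank}(W)$," but that is precisely the selective suppression your formalization calls "encoding significant discriminative information," so $\kappa(W)\approx 1$ does not preclude it and the contrapositive fails. To rescue the argument you would have to either redefine the condition number over all $p=\min(m,n)$ singular values (so that rank deficiency forces $\kappa=\infty$) or restrict to full-rank $W$; either choice changes the statement being proved and needs to be made explicit.

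The second gap is in Step~3. For the deterministic Gaussian setting in which \eqref{eq:output_entropy_singular_values} and the paper's identities live, mutual information is invariant under invertible transformations of one argument: if $W'$ is square and full-rank, then $I(W'X;T)=I(X;T)$ and $I(W'X;N)=I(X;N)$ no matter how anisotropic $W'$ is. Your claimed margin $I(W'X;T)-I(W'X;N)$ over the isotropic competitor is therefore exactly zero, and the "well-trained $\Rightarrow$ near information-bottleneck-optimal" premise cannot distinguish isotropic from anisotropic full-rank maps at all. Anisotropy only buys a genuine signal-versus-nuisance trade-off once you introduce a post-transformation noise channel $Y=WX+\varepsilon$, quantization, or a hard rank constraint; none of these appear in your setup or in the paper. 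Adding such a channel is not a cosmetic fix --- it is the missing ingredient that would make the whole conjecture well-posed, and it interacts non-trivially with the scale-invariance discussion, since with additive noise the absolute scale of $W$ matters again. Your closing paragraph correctly identifies the "well-trained" premise and the non-linear extension as obstacles, but these two earlier gaps are more basic and would sink the linear Gaussian case before either of those issues arises.
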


To further illustrate the impact of singular value distribution on information encoding, we consider a scenario where the Frobenius norm of the weight matrix $W$ is fixed.

\begin{theorem}[Maximum Differential Entropy under Frobenius Norm Constraint]
Let $W \in \mathbb{R}^{n \times m}$ be a matrix with singular values $\sigma_1, \dots, \sigma_p$, where $p = \min(n, m)$. Assuming that the Frobenius norm of $W$ is fixed, $\sum_{i=1}^n \sum_{j=1}^m W_{ij}^2 = \sum_{k=1}^{p} \sigma_k^2 = C$, the differential entropy of a multivariate Gaussian transformed by $W$ is maximized when the matrix $W$ has condition number $\kappa(W) = 1$.
\end{theorem}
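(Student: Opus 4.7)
The plan is to reduce the statement to a constrained optimization over the singular values and then invoke either Lagrange multipliers or the AM--GM inequality. By equation~\eqref{eq:output_entropy_singular_values}, for a spherical Gaussian input, $h(Y)$ decomposes as a constant term $\tfrac{m}{2}\log_2(2\pi e\sigma_x^2)$ that is independent of $W$, plus the log-volume scaling factor $\sum_{i=1}^{p}\log_2\sigma_i(W)$ where $p=\min(n,m)$. Maximizing $h(Y)$ over $W$ with $\|W\|_F^2=C$ fixed therefore reduces to maximizing $\sum_i \log_2\sigma_i$ over nonnegative reals $\sigma_1,\dots,\sigma_p$ subject to the single constraint $\sum_i \sigma_i^2=C$. (For the entropy to be finite we implicitly assume all $\sigma_i>0$, i.e.\ $W$ has full rank $p$; if any $\sigma_i=0$ the objective is $-\infty$ so such points cannot be maximizers.)

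Next, I would solve the reduced problem in two lines. Applying Lagrange multipliers to $F(\sigma_1,\ldots,\sigma_p)=\sum_i\log_2\sigma_i-\lambda\bigl(\sum_i\sigma_i^2-C\bigr)$ gives the stationarity condition $\frac{1}{\sigma_i\ln 2}=2\lambda\sigma_i$, so $\sigma_i^2=\frac{1}{2\lambda\ln 2}$ is the same value for every index $i$. Combined with the constraint this forces $\sigma_i=\sqrt{C/p}$ for all $i$. Equivalently, AM--GM applied to $\{\sigma_i^2\}$ yields $\prod_i\sigma_i^2\le(C/p)^p$ with equality iff all $\sigma_i^2$ are equal, and taking $\tfrac{1}{2}\log_2$ of both sides gives the same bound on $\sum_i\log_2\sigma_i$ with the same equality case.

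To confirm the critical point is a maximum rather than a saddle or minimum, I would note that $\log_2$ is strictly concave and the feasible set $\{\sigma\in\mathbb{R}_{>0}^{p}:\sum\sigma_i^2=C\}$ is connected, while the objective tends to $-\infty$ as any $\sigma_i\to 0^{+}$; so the unique interior critical point is the global maximum. At that maximizer $\sigma_{\max}=\sigma_{\min}=\sqrt{C/p}$, and hence $\kappa(W)=\sigma_{\max}/\sigma_{\min}=1$, which is the claimed conclusion.

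The argument is essentially routine calculus on the sphere, so there is no real obstacle; the only thing requiring mild care is the technical side-condition that $W$ must have full row (or column) rank $p$ for the Gaussian $Y=WX$ to admit a density and hence for the entropy formula \eqref{eq:output_entropy_singular_values} to apply. Once that is acknowledged, the optimization is convex after the substitution $t_i=\sigma_i^2$ (where $\sum_i\log t_i$ is concave and the constraint is linear), so the first-order condition is both necessary and sufficient.
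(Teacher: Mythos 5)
Your proof is correct and follows essentially the same route as the paper: reduce via \eqref{eq:output_entropy_singular_values} to maximizing $\sum_{k}\log_2\sigma_k$ subject to $\sum_{k}\sigma_k^2=C$, then apply Lagrange multipliers to force all singular values equal, giving $\kappa(W)=1$. One small point in your favor: your justification of global optimality (the substitution $t_i=\sigma_i^2$ turning the problem into a concave objective over a linear constraint, or equivalently AM--GM, together with the observation that the objective diverges to $-\infty$ on the boundary) is more watertight than the paper's appeal to ``the constraint set is convex,'' since the sphere $\sum_k\sigma_k^2=C$ is not itself a convex set.
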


\begin{proof}
As can be seen in (\ref{eq:output_entropy_singular_values}), maximizing entropy is therefore equivalent to maximizing the log-volume scaling factor
\begin{equation}
f(\sigma_1, \dots, \sigma_p) = \sum_{k=1}^{p} \log_2(\sigma_k)
\end{equation}
But we also want it subject to the constraint
\begin{equation}
g(\sigma_1, \dots, \sigma_p) = \sum_{k=1}^{p} \sigma_k^2 - C = 0.
\end{equation}
Using the method of Lagrange multipliers, we define the Lagrangian:
\begin{equation}
\mathcal{L}(\sigma_1, \dots, \sigma_p, \lambda) = \sum_{k=1}^{p} \log_2(\sigma_k) - \lambda \left( \sum_{k=1}^{p} \sigma_k^2 - C \right).
\end{equation}
Taking partial derivatives with respect to each \( \sigma_k \) and setting them to zero:
\begin{equation}
\frac{\partial \mathcal{L}}{\partial \sigma_k} = \frac{1}{\sigma_k \ln 2} - 2\lambda\sigma_k = 0.
\end{equation}
Solving for \( \sigma_k \), we obtain:
\begin{equation}
\sigma_k^2 = \frac{1}{2\lambda \ln 2}.
\end{equation}
Since the right-hand side is independent of $k$, it follows that all $\sigma_k$ must be equal. In this case, the condition number $\kappa(W) = \frac{\sigma_{\max}(W)}{\sigma_{\min}(W)} = 1$ (assuming non-zero singular values, which is typical for an active channel). Because the objective function is strictly concave over its domain of positive singular values and the constraint set is convex, this critical point represents the unique global maximum.
%\qed
\end{proof}

As proven, a uniform distribution of singular values maximizes the log-volume scaling factor for a given Frobenius norm\footnote{The differential entropy $h(Z)$ is a scale-variant measure. For a random vector $Z\in\mathbb{R}^m$ and constant $c\neq0$, $h(cZ) = h(Z) + \log_2|c|$}, leading to increased information transfer with higher output entropy and more uncertainty about the output vector. Conversely, tensors with non-uniform singular value distributions (higher condition numbers) exhibit anisotropic behavior, selectively amplifying certain directions while compressing others. In well-trained models, this anisotropy reflects learned feature hierarchies that concentrate representational capacity on task-relevant signals while attenuating less predictive input variations, reducing output entropy while preserving predictive power. This perspective aligns with the Information Bottleneck principle \cite{tishby2015deep}, where effective representations compress input information, reducing mutual information $I(X;Z)$ while retaining task-relevant features, maximizing mutual information $I(Z;Y)$, where $X$ denotes the input data, $Z$ the compressed learned representation, and $Y$ the target output.

Geometrically, a uniform distribution corresponds to a nearly spherical transformation that treats all input directions equally, preserving a broader range of input variability and indicating less specialized encoding. Deviation from uniformity (concentrated singular values, high condition number) yields a smaller log-volume scaling factor for the same Frobenius norm, manifesting as an oblong transformation in which the unit compresses or discards information from less important directions to focus on discriminative features. In the extreme, a highly rank-deficient matrix (most singular values near zero) collapses the input space to a lower-dimensional subspace, signifying ultimate loss of input variability and limited information transfer.

In summary, Theorem 1 establishes, under the given assumptions, that tensors with low condition numbers maximize differential entropy, positioning them as the most information-rich and potentially least specialized components. This theoretical foundation explains their suitability for adaptation tasks, where broad representational capacity facilitates efficient learning without disrupting pre-existing information encoding.

\begin{remark}[Information-Theoretic upper bound for Nonlinear Units]
Let $Z = Wx$ denote the linear pre-activation output, where $x \in \mathbb{R}^n$ is the input vector and $W \in \mathbb{R}^{m \times n}$ is the weight matrix. For a differentiable activation function $\phi$ applied element-wise, the change of variables formula for differential entropy states that:
\begin{equation}
\label{entropy1}
    h(\phi(Z)) = h(Z) + \mathbb{E}[\log |\det J_\phi(Z)|]
\end{equation}
where $J_\phi$ is the Jacobian of $\phi$. For common activations such as leaky ReLU, softplus, tanh, or sigmoid, the derivative satisfies $|\phi'(z)| \leq 1$ almost\footnote{For certain activation functions (e.g., ReLU at $z=0$), the derivative is not strictly defined at isolated points, but this does not affect the entropy calculation since such points have measure zero.} everywhere.
Since $\phi$ acts coordinate-wise, for any vector $z \in \mathbb{R}^m$, the Jacobian matrix is diagonal:
\[
J_\phi(z) = \mathrm{diag}\big(\phi'(z_1),\ldots,\phi'(z_m)\big),
\]
so
\[
\det J_\phi(z) = \prod_{i=1}^m \phi'(z_i)
\quad\text{and}\quad
\log\big|\det J_\phi(z)\big|=\sum_{i=1}^m \log\big|\phi'(z_i)\big|.
\]
If $|\phi'(z_i)|\le 1$ for almost all $z_i$, then $\log|\phi'(z_i)|\le 0$ for each $i$, hence
$\log|\det J_\phi(z)|\le 0$ pointwise. Taking expectations over $Z$ preserves the inequality:
\begin{equation}
\label{entropy2}
\mathbb{E}\big[\log|\det J_\phi(Z)|\big]=\sum_{i=1}^m \mathbb{E}\big[\log|\phi'(Z_i)|\big]\le 0,
\end{equation}
Substituting (\ref{entropy2}) into (\ref{entropy1}) yields $h(\phi(Z)) \leq h(Z)$. Consequently, for any weight matrix $W \in \mathbb{R}^{m\times n}$, the maximum linear entropy $h(Z)\big|_{\kappa(W)=1}$ \emph{under a fixed Frobenius-norm constraint on $W$} (as established in Theorem~1 via the singular-value formulation in (\ref{eq:output_entropy_singular_values})) serves as a theoretical upper bound for the information content available to the post-activation stage. The condition number $\kappa(W)$ thus acts as a scale-invariant proxy for the reduction in this potential capacity, with implications that propagate through subsequent nonlinear transformations.
\end{remark}

\section{Case Study: Selective Fine-Tuning for Catastrophic Forgetting Mitigation}
\label{sec:case_study}

Catastrophic forgetting poses a critical challenge when adapting large pre-trained models to new tasks or modalities. In multimodal LLMs (MM-LLMs), for instance, adding audio processing while preserving extensive linguistic knowledge is essential. Our hypothesis is that insights from the condition number and singular value distribution of weight tensors can guide which parts of a pre-trained LLM to fine-tune, enabling adaptation while mitigating forgetting. We instantiate this idea in \textbf{KappaTune}\footnote{https://github.com/oswaldoludwig/kappaTune}, a training algorithm that selectively unfreezes low-kappa tensors.

\subsection{KappaTune: General Algorithm}
\label{sec:kappatune_general}

KappaTune selectively fine-tunes a pre-trained model by unfreezing a small budget of low-kappa tensors, i.e. those with the smallest condition numbers, while keeping the remaining parameters frozen. This concentrates adaptation on less specialized, numerically stable components to mitigate catastrophic forgetting.

Let $W$ denote a weight tensor. Multi-dimensional tensors (e.g., convolutional kernels) are reshaped into a 2D matrix $(d_{\text{out}}, d_{\text{in}})$ to enable singular value decomposition (SVD). KappaTune precomputes $\kappa(W) = \sigma_{\max}(W)/\sigma_{\min}(W)$ for all eligible tensors (excluding biases and normalization scalars unless stated) and unfreezes a fixed budget of those with the smallest condition numbers (low anisotropy). The resulting set of tensor names is stored and used to selectively unfreeze during training, see Algorithm \ref{alg:kappa_general} for further details.

{\spaceskip=0.1em \relax
\begin{algorithm}[H]
\scriptsize
\SetAlgoLined
\KwData{Task dataset $\mathcal{D}$ (inputs $x$, targets $y$), pre-trained model $M$, eligible tensor set $\mathcal{S}$, budget $K$}\
\KwResult{Partially fine-tuned model $M$}\
\textbf{Precompute selection:}\\
Define reshape rule per layer type; compute $\kappa(W)=\sigma_{\max}(W)/\sigma_{\min}(W)$ for all $W\in\mathcal{S}$\;
Sort $\mathcal{S}$ by $\kappa$ ascending; $T_{\text{trainable}} \gets$ first $K$ tensor names\;
\textbf{Initialize:}\\
Freeze all parameters of $M$; unfreeze $t\in T_{\text{trainable}}$\;
Optimizer $\leftarrow$ Adam($\left\{t\in T_{\text{trainable}}\right\}$); choose task loss (e.g., cross-entropy)\;
\For{\textbf{epoch} $=1$ \KwTo NumEpochs}{
  \For{\textbf{batch} $(x,y)\in\mathcal{D}$}{
    \textbf{Forward:} $o \gets M(x)$ \tcp*{Task-agnostic forward}\
    \textbf{Loss:} $\mathcal{L}(o,y)$ \tcp*{Causal LM / classification / seq2seq}\
    \textbf{Backward \& Update:} \texttt{backward(); step(); zero\_grad()}.
  }
  Save checkpoint: $M.\texttt{save\_pretrained()}$ (and any task-specific heads).\
}
\caption{KappaTune (general selective fine-tuning)}
\label{alg:kappa_general}
\end{algorithm}
}

\subsection{Adapting a pre-trained LLM to a new task}

KappaTune is assessed on \emph{OLMoE} \cite{muennighoff2024olmoe}, an open-source MoE-based LLM (1B active parameters, 7B total), configured with 64 experts per layer and top-8 expert activation. The motivation for using an MoE model is the finer granularity for tensor selection in modular experts\footnote{\url{https://github.com/oswaldoludwig/kappaTune/blob/main/experiments_sarcasm_kappatune.py}}. 

For task adaptation, we use the small \emph{TweetEval–Irony} subset with 500 training and 200 test samples \cite{barbieri2020tweetevalunifiedbenchmarkcomparative}. Forgetting is measured post hoc via the change in perplexity ($\Delta$PPL) on WikiText-2, a diverse corpus for language modeling benchmarks that serves as the baseline control for evaluating general retention. $\Delta$PPL closer to zero indicates near-perfect retention of pre-trained knowledge.

OLMoE is fine-tuned as a prompted classifier, fine-tuning it to output exact "yes" or "no" tokens following the prompt "Tweet: \{text\}. Sarcasm:", with evaluation based on the logit values for these tokens. This forces the model to conform to a strict classification format, which the base OLMoE is not pretrained for, resulting in a modest performance, even like this the relative improvements of KappaTune over LoRA can be clearly observed.

KappaTune unfreezes 75 low-kappa tensors resulting $\sim$153M trainable parameters across the MoE's attention and MLP modules (experts and shared blocks), while leaving routing behavior unchanged. Selection and optimization follow Algorithm~\ref{alg:kappa_general}. 

The LoRA baseline matches a similar number of trainable parameters ($\sim$153M) with LoRA rank of 16 applied to attention projections (\texttt{q\_proj}, \texttt{v\_proj}) and MoE expert projections (\texttt{gate\_proj}, \texttt{up\_proj}, \texttt{down\_proj}).

\begin{table}[htbp]
  \centering
  \caption{KappaTune vs. LoRA on OLMoE. All metrics (Accuracy, $\Delta$PPL, Final Train Loss, and Wall-clock time) represent mean values averaged over 10 experiments.}
  \label{tab:moe_results_sorted}
  \begin{tabular}{|l|c|c|c|c|c|}
    \hline
    Method & Accuracy (\%) & Forgetting ($\Delta$PPL) & Final Train Loss & Train Time (s) & Epochs\\
    \hline
    KappaTune & 70.33 & 0.2363 & 2.8902 & 740 & 18\\
    \hline
    LoRA      & 70.67 & 1.4783 & 2.8893 & 1271 & 12\\
    \hline
  \end{tabular}
\end{table}

KappaTune significantly outperforms the LoRA baseline in forgetting mitigation when both methods are trained to achieve comparable accuracy levels, exhibiting minimal degradation with $\Delta$PPL = 0.2363, over six times less than LoRA's $\Delta$PPL = 1.4783. This highlights KappaTune's superior preservation of pre-trained knowledge despite matched parameter budgets. The efficiency gains are further evident in faster wall-clock training times, as LoRA required fewer epochs but incurred higher overhead from adapters compared to KappaTune's direct tensor unfreezing\footnote{The reported training time excludes the one-time selection overhead of approximately 370s (measured on CPU) required to calculate condition numbers for all candidate tensors.}.

\subsection{Adapting a pre-trained LLM to a new task via KappaTune-LoRA}
\label{sec:imdb_study}

LoRA adapters enable task-specific fine-tuning while allowing attachment and detachment to avoid catastrophic forgetting. However, preserving pre-trained general knowledge, even when LoRA adapters are attached for a specific task, enhances the model's reasoning capabilities on the target task, thereby improving generalization capacity. To assess this in a high-parameter regime, we perform a comparative analysis on the DeepSeek-V2-Lite LLM \cite{liu2024deepseek}, a MoE model with 16B total parameters and $\sim$2.4B active parameters. 

Specifically, we evaluate whether KappaTune-LoRA, which strategically places LoRA adapters on tensors with low condition numbers, outperforms standard uniform placement. This comparison controls the trainable parameters by approximately 190 million for both experiments, focusing on generalization in sentiment analysis\footnote{\url{https://github.com/oswaldoludwig/kappaTune/blob/main/experiments_SA_kappatune.py}}.

The model is loaded in 4-bit precision and two adaptation strategies are adopted:
\begin{itemize}
    \item \textbf{LoRA Global:} Standard low-rank adaptation ($r=16$, $\alpha=32$) targeting all major linear projections (\texttt{q\_proj}, \texttt{v\_proj}, \texttt{up\_proj}, \texttt{down\_proj}). This configuration serves as the dense adaptation baseline.
    \item \textbf{KappaTune-LoRA:} The condition number ($\kappa$) is computed for all expert tensors in the MoE layers. A budget of $K=300$ tensors with the lowest $\kappa$ is selected. To ensure a fair comparison of capacity, the rank is increased to $r=190$ for these selected adapters, resulting in the same number of trainable parameters as LoRA Global. 
\end{itemize}

Both methods are trained on the IMDB sentiment dataset \cite{maas2011learning} and evaluated on WikiText-2 to assess catastrophic forgetting, while generalization capacity is measured on IMDB via the performance gap ($\Delta$PPL) between training and test data.

The IMDB data is formatted as ``Review: [text] \textbackslash n Sentiment: [positive/negative]'' for causal language modeling during training and evaluation, with the model's performance assessed via perplexity computed over these annotated sequences.

Table \ref{tab:imdb_deltas} presents the generalization gap ($\Delta$PPL) on IMDB, defined as the difference between test and train perplexity, and the forgetting cost ($\Delta$PPL) on WikiText-2, defined as the increase in perplexity after fine-tuning relative to the unadapted baseline. In both cases, a positive $\Delta$ indicates degradation, with smaller values signifying better performance.

\begin{table}[htbp]
  \centering
  \caption{Generalization and Forgetting Analysis. Lower values are better for both metrics.}
  \label{tab:imdb_deltas}
  \begin{tabular}{l|c||c}
    \hline
    \textbf{Method} & \textbf{Generalization Gap ($\Delta$PPL)} & \textbf{Forgetting Cost ($\Delta$PPL)} \\
     & ($\text{PPL}_{\text{IMDB Test}} - \text{PPL}_{\text{IMDB Train}}$) & ($\text{PPL}_{\text{Wiki Fine-Tuned model}} - \text{PPL}_{\text{Wiki Base model}}$) \\
    \hline
    LoRA Global & +2.06 & +3.44 \\
    KappaTune & \textbf{+1.61} & \textbf{+1.21} \\
    \hline
  \end{tabular}
\end{table}

The results indicate a link between knowledge retention and task generalization. When trained to achieve equivalent performance on the IMDB sentiment classification task as KappaTune-LoRA, the LoRA Global baseline (with uniform adapter placement) exhibits 2.8$\times$ higher catastrophic forgetting and a 1.3$\times$ larger generalization gap, as measured by the increase in perplexity on held-out data sets.

This superior generalization supports our hypothesis that by restricting adaptation to stable and less specialized tensors, KappaTune preserves the model's broad linguistic capabilities (forgetting cost of only +1.21). This retained general knowledge acts as an inductive bias, aiding the model in reasoning about the new task rather than relying solely on surface-level pattern matching from the training data.

\subsection{Adapting a pre-trained LLM to a new input modality (ASR)}

The MM-LLM is composed with a frozen audio encoder and a text LLM connected by a trainable adaptor. The audio encoder is the Whisper large-v2 encoder (frozen) \cite{radford2022robustspeechrecognitionlargescale}, which transforms raw audio into high-dimensional embeddings. The text LLM is Llama-3.2-8B-Instruct \cite{llama3-herd}. The trainable adaptor bridges Whisper outputs to the Llama embedding space via a linear projection, layer normalization, and two multihead attention layers with positional encoding. Audio features are sub-sampled $4\times$ by concatenation.

The model is fine-tuned on proprietary ASR audio-text pairs (English and German) in the automotive domain, with details withheld due to confidentiality. We report Word Error Rate (WER) on held-out English and German test sets. All Whisper parameters are frozen. Only KappaTune-selected tensors are unfrozen. We train with Adam on the adaptor plus unfrozen Llama tensors, using token-level cross-entropy and shifted labels for causal prediction aligned to the concatenated \{adapted audio, BOS, text\} embeddings. Tensor selection and optimizer settings follow Algorithm~\ref{alg:kappa_general} and the multimodal forward and shifted labeling follow Algorithm~\ref{alg:kappa_mm}.

{\spaceskip=0.1em \relax
\begin{algorithm}[H]
\scriptsize
\SetAlgoLined
\KwData{Audio-text pairs $(X_{\text{audio}}, X_{\text{text}})$, frozen Whisper encoder $E_W$, Llama $M_L$, adaptor $A$, tokenizer $\mathcal{T}$, $T_{\text{trainable}}$ from Alg.~\ref{alg:kappa_general}}
\KwResult{Fine-tuned adaptor $A$, partially fine-tuned Llama $M_L$}

\textbf{Initialize (per Alg.~\ref{alg:kappa_general}):}\\
Freeze all $E_W$; freeze all $M_L$; unfreeze $t\in T_{\text{trainable}}$ in $M_L$;\\
Set optimizer on $\{A\}\cup\{t\in T_{\text{trainable}}\}$; set loss to token-level cross-entropy.\\

\For{\textbf{epoch} $=1$ \KwTo NumEpochs}{
  \For{\textbf{batch} $(\texttt{audio}, \texttt{text}, \texttt{len}_a, \texttt{len}_t)$}{
    $Y_{\text{whisper}} \gets E_W(\texttt{audio})$ \tcp*{Padded mel features $\rightarrow$ embeddings}
    $E_{\text{adapted}} \gets A(Y_{\text{whisper}})$ \tcp*{Linear + LN + MHA + PE; $4\times$ sub-sampling}
    $Tokens \gets \mathcal{T}(\texttt{text} + \texttt{EOS})$;\quad $E_{\text{labels}} \gets \text{emb}(Tokens)$;\quad $E_{\text{BOS}} \gets \text{emb}(\texttt{BOS})$\\
    $InputsEmbeds \gets \text{Concat}(E_{\text{adapted}}, E_{\text{BOS}}, E_{\text{labels}})$\\
    Build $CausalLabels$ aligned to $(\texttt{len}_a + 1 + \texttt{len}_t)$, masking non-predictive positions with $-100$\\
    $Loss \gets \text{CE}(M_L(\texttt{inputs\_embeds=InputsEmbeds}), \texttt{labels=CausalLabels})$\\
    \texttt{backward(); step(); zero\_grad()}.
  }
  Save checkpoints: $A.\texttt{save\_state()}$, $M_L.\texttt{save\_pretrained()}$.
}
\caption{KappaTune-MM (multimodal version, selection and optimization per Alg.~\ref{alg:kappa_general})}
\label{alg:kappa_mm}
\end{algorithm}
}

Figure~\ref{fig:wer_plot_new} reports average WER on English and German test data over training days for Llama-3.2-8B with two unfreezing strategies: the \emph{lowest-kappa} 100 tensors (\textit{8B LLM selected 100}) and the \emph{highest-kappa} 100 tensors (\textit{8B LLM INVERSE 100}). Both share identical hyperparameters and similar trainable parameter counts.

\begin{figure}[htbp]
\centering
\includegraphics[width=0.6\textwidth]{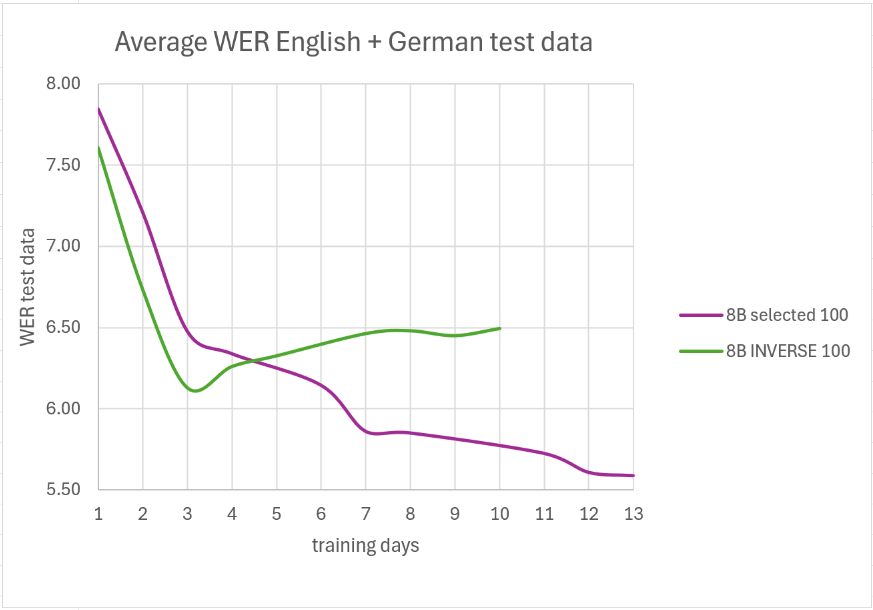}
\caption{Average WER (English \& German) for two Llama-8B multimodal configurations. We compare unfreezing 100 \emph{lowest}-kappa tensors vs.~100 \emph{highest}-kappa tensors.}
\label{fig:wer_plot_new}
\end{figure}

The \textit{INVERSE 100} strategy initially achieves 6.13\% WER at day~3 but degrades thereafter, consistent with catastrophic forgetting when highly anisotropic (high-kappa) tensors are updated. These tensors likely encode specialized filters; perturbing them overwrites critical pre-trained knowledge and may also introduce numerical instability. In contrast, \textit{selected 100} (lowest-kappa) monotonically reduces WER to 5.61\% by day~12, supporting the hypothesis that adapting less specialized, numerically stable tensors enables new capabilities with reduced forgetting.

\section{Conclusion}
\label{sec:conclusion}

This paper shows that the condition number of a weight tensor, while traditionally associated with numerical stability, also serves as an indicator of the information-processing strategy learned during pre-training. A high condition number reflects the unit's ability to perform efficient information compression and amplification, a critical aspect of encoding significant discriminative knowledge. This anisotropic transformation reshapes the input into a more focused and informative representation, which is central to the effectiveness of pre-trained models.

By analyzing measures such as the condition number and the log-volume scaling factor, we gain deeper insights into how neural transformations encode information. Our theoretical analysis, supported by a practical case study on LLM fine-tuning, demonstrates that these principles can guide selective adaptation strategies that mitigate catastrophic forgetting without requiring access to pre-training data.

Future work should extend this analysis beyond Gaussian assumptions to realistic data distributions, rigorously quantifying how the full singular value spectrum, not only the condition number, governs information encoding in deep networks. Additionally, broader empirical validation across architectures, tasks, and modalities, combined with adaptive threshold selection policies, would enhance the method's generality and scalability for large neural systems.

\bibliographystyle{IEEEtran}
\bibliography{biblio}

\end{document}